\newtheorem{thm}{Theorem}
\newtheorem{crl}{Corollary}
\newcommand{\xbold}{\boldsymbol{x}}
\newcommand{\zbold}{\boldsymbol{z}}
\newcommand{\abold}{\boldsymbol{a}}
\newcommand{\bbold}{\boldsymbol{b}}
\newcommand{\ubold}{\boldsymbol{u}}
\newcommand{\onebold}{\boldsymbol{1}}
\newcommand{\y}{\boldsymbol{y}}
\newcommand{\w}{\boldsymbol{w}}
\newcommand{\Ftr}{\mathbf{\Phi}}
\newcommand{\0}{\boldsymbol{0}}
\newcommand{\Y}{\mathcal{Y}}
\newcommand{\D}{\mathcal{D}}
\newcommand{\e}{\ell}
\newcommand{\1}[1]{\boldsymbol{1}_{#1}}
\newcommand{\indicator}[1]{\boldsymbol{1}{[#1]}}
\newcommand{\sign}{\textrm{sign}}
\title{Cumulative Sum Ranking}
\author{
    Ruy Luiz Milidiú\\
    Departmento de Informática\\
      Pontifícia Univesidade Católica do Rio de Janeiro\\
      Rio de Janeiro, RJ 22453-900, Brazil \\
    \texttt{ milidiu@inf.puc-rio.br}\\
    \And 
    Rafael Henrique Santos Rocha \\
    Departmento de Informática\\
      Pontifícia Univesidade Católica do Rio de Janeiro\\
      Rio de Janeiro, RJ 22453-900, Brazil \\
    \texttt{ rhsantos@inf.puc-rio.br}\\
}
\begin{document}
% \nipsfinalcopy is no longer used

\maketitle

\begin{abstract}
 The goal of Ordinal Regression is
	to find a rule that ranks items from a given set.
Several learning algorithms to solve this prediction problem
build an ensemble of binary classifiers.
Ranking by Projecting uses interdependent binary perceptrons.
These perceptrons share the same direction vector,
but use different bias values.
Similar approaches use independent direction vectors and biases.
To combine the binary predictions,
most of them adopt a simple counting heuristics.
Here,
we introduce a novel cumulative sum scoring function to combine
the binary predictions.
The proposed score value aggregates
	the strength of each one of the relevant binary classifications on how large is the item's rank. 
We show that our modeling casts ordinal regression as a Structured Perceptron problem.
As a consequence,
we simplify its formulation and description,
which results in two simple online learning algorithms.
The second algorithm is a Passive-Aggressive version of the first algorithm.
We show that under some rank separability condition
both algorithms converge.
Furthermore,
we provide mistake bounds for each one of the two online algorithms.
For the Passive-Aggressive version,
we assume the knowledge of a separation margin,
what significantly improves the corresponding mistake bound.
Additionally,
we show that Ranking by Projecting is a special case of our prediction algorithm.
From a neural network architecture point of view,
our empirical findings suggest a layer of cusum units for ordinal regression,
instead of the usual softmax layer of multiclass problems.

\end{abstract}
\section{Introduction}
\label{sec:introduction}

In the Ordinal Regression problem,
the general goal is to find a rule that ranks items from a given set
\citep{hang2011}.
For that sake,
it is assumed that each item from the set is described by a pair $(\xbold,y)$,
	given by $d$ numeric attributes $\xbold$ in $\mathbb{R}^d$
	and an integer $y$ in the rank set $Y = \{1,\cdots,r\}$ representing the item's rank.
Without loss of generality,
we assume that the last attribute has a constant value equal to $-1$,
that is,
$x_d = -1$ for all items.
This assumption helps to simplify notation when constructing linear discriminative models,
what would be the case in what follows.
The higher the rank value,
the higher the item's rank.
Hence,
we are assuming that the rank set has a total order.
For this problem,
we are required to provide a predictor $h$,
such that the value $\hat{y}$,
defined by
$
	\hat{y} = h(\xbold),
$
is close to the value $y$ for the pair $(\xbold,y)$.
The prediction accuracy is measured by a given loss function
$\e(y,\hat{y})$ with values in $\mathbb{R}$.
An illustrative example of loss function is
$\e(y,\hat{y}) = |y - \hat{y} |$. 

Several learning algorithms to solve the Ordinal Regression prediction problem
build an ensemble of $r$ binary classifiers.
The $k-$th binary task is
	to classify if a given item rank $y$ is equal to or larger than $k$.
Observe that the first binary task is trivial,
	since every rank $y$ is equal to or larger than $1$.
Ranking by Projecting \citep{crammer2002}
uses an ensemble of interdependent binary perceptrons to generate its prediction.
These perceptrons share the same direction vector,
but use different bias values.
Let the vector $\ubold$ represent the projection direction, 
	and the vector $\bbold$ represent the $r$ ordered thresholds
								$b_1 \leq b_2 \leq \cdots \leq b_r$,
	that are used to split the ranking classes along the direction $\ubold$,
with the first threshold fixed as $b_{1}= - \infty$.
Hence,
the binary perceptron parameters can be written as $\w_y = (\ubold,b_y)$.
These binary classifiers are simultaneously trained by the \emph{PRank} algorithm.
We restate the Ranking by Projecting predictor as
\begin{equation}
	\hat{y} = \max_{y \in Y, \w_y.\xbold \geq 0}{y}\;.
\end{equation}

In recent approaches to Ordinal Regression
\citep{li2007,niu2016},
the ensemble of binary perceptrons uses independent direction vectors and biases.
To combine these binary predictions,
they adopt a simple counting heuristics that we restate as
\begin{equation}
\label{eq:countingPredictor}
	\hat{y} = \sum_{k=1}^{r}{\indicator{\w_k .\xbold \geq 0}}
\end{equation}
where $\w_k$ is the direction vector of the $k-$th binary perceptron.
By observing that all ranks are equal to or larger than $1$,
we set $\w_1 = 0$ to be used as a sentinel value in the computations.

A consistency requirement for the counting predictor given by (\ref{eq:countingPredictor}) is that
$
	\indicator{\w_1 .\xbold \geq 0} \geq \indicator{\w_2 .\xbold \geq 0} \geq
	\cdots 	\geq \indicator{\w_r .\xbold \geq 0} \;\;.
$
For the ensemble with independent direction vectors,
this constraint is hard to satisfy.
Therefore,
it is usually relaxed by most learning schemes.
That is not the case for the \emph{Prank} algorithm,
	where this constraint is always satisfied,
	since it keeps $b_1 \leq b_2 \leq \cdots \leq b_r$
	throughout the whole learning process.

Similarly,
\cite{pedregosa2017} introduce a generalized counting predictor
that we restate as follows
\begin{equation}
\label{eq:consistentPredictor}
	\hat{y} = \sum_{k=1}^{r}{\indicator{\alpha_{k-1} < 0}}
\end{equation}
where $\alpha_k$ is the $k-$th ordering score,
and we set $\alpha_0 = -\infty$ to be used as a sentinel value in the computations.
To complete their predictor specification,
they add the constraint
$
    \alpha_0 \leq \alpha_1 \leq \cdots \leq \alpha_{r-1} \;\;.
$
They use Fisher consistent surrogate loss functions for learning this predictor.

\citet{antoniuk2013} propose
	a linear classification framework for ordinal regression,
which uses $m$ binary classifiers to compose the ensemble.
To solve the underlying learning problem,
they cast it as an unconstrained convex risk minimization problem,
for which many efficient solvers exist.
We call this approach as the \emph{General Linear Multiclass Ordinal Regression}
 (GLMORD).
We restate their predictor as
\begin{equation}
\label{eq:generallinearPredictor}
	\hat{y} = \arg\max_{k =1,\cdots, r}
										{ \sum_{j=1}^{m}{a_{kj}\w_j . \xbold} }\;.
\end{equation}

Here,
we introduce a novel cumulative sum scoring function $s$ to combine
the binary predictions.
It is defined as
\[
	s(\xbold,k,\w_1,\cdots,\w_r) = \sum_{j=1}^{k}{\w_j . \xbold}\;. 
\]
where $k=1,\cdots,r$.
The $s$ score value aggregates
	the strength of each one of the relevant binary classifications on how large is the item's rank.
This measure improves consistency among the binary predictions,
by taking into account how strong they are.
We call \emph{CuSum Rank} the resulting new predictor,
which chooses the rank with the highest score,
that is,
\begin{equation}
\label{eq:cusumPredictor}
	\hat{y} = \arg\max_{k =1,\cdots, r} \sum_{j=1}^{k}{\w_j . \xbold} \;. 
\end{equation}

Observe that
	(\ref{eq:cusumPredictor}) is a special case of (\ref{eq:generallinearPredictor}),
	with $m=r$ and $a_{kj} = \indicator{j \leq k}$.

The Structured Perceptron introduced by \cite{collins2002}
is an online learning framework that generalizes the binary perceptron \citep{rosenblatt1957}.
It has been used to build structured predictors for
	multiclass,
	free trees,
	arborescences,
	clusters,
	disjoint intervals,
	shortest paths,
	among many other structured outputs.
%\citep{fernandes2012}.
In this work,
we cast Ordinal Regression as a Structured Perceptron problem.
	
\iffalse	
We cast
	\emph{Prank},
	\emph{CuSum Rank},
	and Passive-Aggressive \emph{CuSum Rank}
	%and \emph{GLMORD}
as Structured Perceptrons.
As a consequence,
we obtain online learning algorithms for each one of them.
We show that under a rank separability condition,
we can apply the \emph{Structured Perceptron Theorem},
proving that
    the three corresponding algorithms converge.
Furthermore,
we provide mistake bounds for the online versions of these algorithms.
Our mistake-bound analysis is close to \cite{crammer2002},
but we generalize their result to the \emph{CuSum Rank} algorithm.
For the Passive-Aggressive version,
we assume the knowledge of a separation margin,
what significantly improves the corresponding mistake bound.
\fi

In Section \ref{sec:separability},
	we introduce the notion of rank linear separability.
In Section \ref{sec:cusum},
	we introduce \emph{CuSum Rank}
		-- the cumulative sum ranking algorithm.
In Section \ref{sec:structured},
	we introduce the loss augmented linearly separable structured problems.
	Then,
	we show a general mistake bound for the structured perceptron on these problems.
In Section \ref{sec:cusumrank},
	we cast the cumulative sum ranking algorithm as a Structured Perceptron,
		obtaining as a consequence its corresponding mistake bound.
In Section \ref{sec:ranking},
	we cast the \emph{Prank} algorithm also as a Structured Perceptron.
In Section \ref{sec:experiments},
	we perform some experiments to illustrate the \emph{CuSum Rank} approach.
Finally,
in Section \ref{sec:conclusion},
    we summarize and discuss our findings.
\section{Rank Linear Separability}
\label{sec:separability}
Some special conditions are sufficient
to ensure that an ensemble of perceptrons
consistently solves an Ordinal Regression problem.
Let us assume that we are given a dataset $\D$ for this problem.
A strong condition is that
	all the perceptrons in the ensemble correctly classify the examples in $\D$,
	with some positive margin $\delta > 0$.
\iffalse
The $k$-th binary perceptron task is
	to classify if a given item rank $y$ is equal to or larger than $k$.
Let $\w_k$ be the parameter vector of the $k$-th binary perceptron
and also $\w=(\w_1,\cdots,\w_r)$.
\fi
Hence,
for each example $(\xbold, y) \in \D$,
%we must have
%\[
		%\w_k . \xbold \geq   \delta   \;\;\;\;\;\;\;\;\;\;\;\;\;\textrm{for}\;\;    k \leq y
%\]
%and
%\[
			%\w_k . \xbold \leq  -\delta    \;\;\;\;\;\;\;\;\;\;\textrm{for}\;\;     		k   >  y		\;.
%\]
%Therefore,
%the $k$-th binary perceptron margin satisfies the following condition
\begin{equation}
	\label{eq:separability}
	\sign{(y - k)}. \w_k . \xbold \geq   \delta \;\;\;\;\;\;\;\;\;\;\;\;
											\textrm{for}\;\; 2 \leq k \leq r\;,
\end{equation}
where we omit the first perceptron since it is solving a trivial task.

When (\ref{eq:separability}) is satisfied
	for all $(\xbold, y) \in \D$
	and $\| \w \| = 1$,
we say that
	$\D$ is \emph{rank linearly separable}
	by $\w_1,\cdots,\w_r$
	with margin $\delta$.
Suppose we have
	$\ubold$ and $\bbold$ such that
	$b_1 \leq b_2 \leq \cdots \leq b_r$,
	$\| (\ubold, \bbold) \| = 1$
	and (\ref{eq:separability}) holds
	with $\w_k = (\ubold, b_k)$ for $k=1,\cdots,r$.
In this case,
we say that
	$\D$ is \emph{Prank linearly separable}
	by $\ubold$ and $\bbold$
	with margin $\delta$,
since we have
\[
		b_y + \delta \leq \sum_{j=1}^{d-1}{u_j.x_j} \leq b_{y+1} - \delta
\]
for each example $(\xbold, y) \in \D$.
Let $\D_0 = \{  ((0,0,-1),1),((0,1,-1),2),((1,1,-1),2),((1,0,-1),3), \}$ be a dataset.
It is easy to see that $\D_0$ is rank linearly separable,
but not Prank linearly separable.

Rank linear separability is a key condition for an ensemble
of perceptrons to solve ordinal regression.
%This is what we show in the following sections,
%by using the Structured Perceptron framework.
\section{Cumulative Sum Ranking}
\label{sec:cusum}
Let us assume that we are given a dataset $\D$ of examples $(\xbold, y)$ for an Ordinal Regression problem,
	where $\xbold \in \mathbb{R}^d$ and $y \in \{1,\cdots,r\}$.
Our approach here is to build an ensemble of $r$ binary perceptrons.
For $k=1,\cdots,r$,
	the $k$-th binary task is
		to classify if a given item rank $y$ is equal to or greater than $k$.
	
To define the $s$ score of a given rank value $k$ for an input $\xbold$,
we add the margin contributions from each one of the binary perceptrons,
what gives
\[
	s(\xbold,k;\w) = \sum_{j=1}^{r}{\sign{(k-j)}.\w_j . \xbold} \;,
\]
that is,
\begin{equation}
\label{eq:sumscore}
	s(\xbold,k;\w)
									%= \sum_{j=1}  ^{k}{\w_j . \xbold}
									%- \sum_{j=k+1}^{r}{\w_j . \xbold}
									= 2.\sum_{j=1}  ^{k}{\w_j . \xbold}
									- \sum_{j=1}^{r}{\w_j . \xbold} \;.
\end{equation}

From (\ref{eq:sumscore}),
we get that
%\[
	%\max_{1 \leq k \leq r} { s(\xbold,k;\w)	} =	
				%- \sum_{j=1}^{r}{\w_j . \xbold}
				%+ 2. \max_{1 \leq k \leq r} { \left( \sum_{j=1}  ^{k}{\w_j . \xbold} \right) } \;,
%\]
%that is,
\[
	\arg\max_{1 \leq k \leq r} {s(\xbold,k;\w)	} =
									\arg\max_{1 \leq k \leq r} { \left( \sum_{j=1}  ^{k}{\w_j . \xbold} \right) } \;.
\]

Hence,
our maximum score predictor is given by
\begin{equation}
\label{eq:cusumscore}
	\hat{y}  =  \arg\max_{1 \leq k \leq r} { \left( \sum_{j=1}  ^{k}{\w_j . \xbold} \right) } \;.
\end{equation}

The proposed ensemble of binary classifiers results in a \emph{cumulative sum} guided predictor.
From a neural network architecture point of view,
equation (\ref{eq:cusumscore}) suggests a layer of cusum units for ordinal regression,
whereas in the multiclass problem it is usual to have a softmax layer.
In Figure \ref{fig:cusumrank},
we outline the \emph{CuSum Rank} algorithm
	for online learning of the predictor parameters in (\ref{eq:cusumscore}).
Observe that $\w_1 = \0$ throughout the whole learning process,
	since it is never updated by the algorithm.
\begin{figure}[ht]
\small
\center
\fbox{\begin{minipage}{0.6\textwidth}
\begin{description} \itemsep0pt \parskip0pt \parsep0pt \itemindent-0.9cm
	\item \textbf{for} $k=1$ \textbf{to} $r$ 
 				\begin{description} \itemindent-1.2cm
    		 			\item $\w_k \leftarrow 0$
				\end{description}
  \item \textbf{for each new} $(\xbold,y) \in \D$
    		\begin{description} \itemindent-1.2cm
    					\item $\hat{y}  =  \arg\max_{1 \leq k \leq r} { \left( \sum_{j=1}  ^{k}{\w_j . \xbold} \right) }$
    					\item \textbf{if} $y \neq \hat{y}$
    					\begin{description} \itemindent-1.2cm
										\item \textbf{for} $k=\min(y,\hat{y})+1$ \textbf{to} $\max(y,\hat{y})$ 
			    		  				\begin{description} \itemindent-1.2cm
			    		  						\item $\w_k \leftarrow \w_k + \sign(y-\hat{y}).\xbold$
												\end{description}
							\end{description}						
    		\end{description}
    		\item \textbf{return}$(\w)$
\end{description}
\end{minipage}}
\caption{The \emph{CuSum Rank} algorithm.}
\label{fig:cusumrank}
\end{figure}

The Passive-Aggressive approach uses a different learning rule.
In Figure \ref{fig:passive_agressive_cusumrank},
we outline the loss sensitive Passive-Aggressive \emph{CuSum Rank} algorithm
	for online learning of the predictor parameters in (\ref{eq:cusumscore}),
	assuming that we know a separation margin $\delta$.

\begin{figure}[ht]
\small
\center
\fbox{\begin{minipage}{0.6\textwidth}
\begin{description} \itemsep0pt \parskip0pt \parsep0pt \itemindent-0.9cm
	\item \textbf{for} $k=1$ \textbf{to} $r$ 
 				\begin{description} \itemindent-1.2cm
    		 			\item $\w_k \leftarrow 0$
				\end{description}
  \item \textbf{for each new} $(\xbold,y) \in \D$
    		\begin{description} \itemindent-1.2cm
    					\item $\hat{y}  =  \arg\max_{1 \leq k \leq r} { \left( \sum_{j=1}  ^{k}{\w_j . \xbold} \right) }$
    					\item \textbf{if} $y \neq \hat{y}$
    					\begin{description} \itemindent-1.2cm
										\item $\bar{\w} \leftarrow 0$
										\item \textbf{for} $j=\min(y,\hat{y})+1$ \textbf{to} $\max(y,\hat{y})$ 
			    		  				\begin{description} \itemindent-1.2cm
			    		  						\item $\bar{\w} \leftarrow \bar{\w} + \w_j$
												\end{description}
												\item $\rho \leftarrow \frac{\sign(y-\hat{y}).\delta - \bar{\w}.\xbold}{|\hat{y} - y|.\|\xbold\|^2}  $
										\item \textbf{for} $j=\min(y,\hat{y})+1$ \textbf{to} $\max(y,\hat{y})$ 
			    		  				\begin{description} \itemindent-1.2cm
			    		  						\item $\w_j \leftarrow \w_j + \rho.\xbold$
												\end{description}
							\end{description}						
    		\end{description}
    		\item \textbf{return}$(\w)$
\end{description}
\end{minipage}}
\caption{The online loss sensitive Passive-Aggressive \emph{CuSum Rank} algorithm.}
\label{fig:passive_agressive_cusumrank}
\end{figure}

Both versions of the \emph{CuSum Rank} algorithm have been designed
	as online Structured Perceptrons.
%In the next sections,
%	we show this design rationale and its learning consequences.

\section{Structured Perceptron}
\label{sec:structured}

In machine learning based structured prediction,
	we learn a predictor $h$ from a training set $\D$
	of correct input-output pairs $(\xbold,\y)$.
It is expected that the predicted structure $\hat{\y}$,
given by
$
	\hat{\y} = h(\xbold),
$
provides a good approximation to the structure $\y$
of the corresponding example $(\xbold,\y) \in \D$.
The predictor quality is expressed by a loss function $\e(\y,\hat{\y})$:
	the smaller the loss,
	the better the prediction.

After observing a new example $(\xbold,\y) \in \D$,
the \emph{online} structured perceptron algorithm updates the weight vector $\w$ of a parameterized predictor given by
$$
	\hat{\y} = h(\xbold;\w) = \arg\max_{\y \in \Y(\xbold)} \w . \Ftr(\xbold,\y),
$$
	where
		$\Y(\xbold)$ is the set of feasible output structures for $\xbold$
		and $\Ftr(\xbold,\y)$ is a feature map of $(\xbold,\y)$.
In Figure \ref{fig:sperc},
we outline this algorithm.
The prediction $\hat{\y}$ is the solution of an optimization problem,
 	the so called {\em prediction problem}.
The objective function of this problem is given by $s$ 
	and scores candidate output structures for the given input.
\begin{figure}[ht]
\small
\center
\fbox{\begin{minipage}{0.6\textwidth}
\begin{description} \itemsep0pt \parskip0pt \parsep0pt \itemindent-0.9cm
  \item $\w \leftarrow \0$
    \item \textbf{for each new} $(\xbold,\y) \in \D$
    \begin{description} \itemindent-1.2cm
    		\item $\hat{\y} \leftarrow \arg\max_{\y \in \Y(\xbold)} \w . \Ftr(\xbold,\y) $
%    		\item $\hpred \leftarrow \arg\max_{h \in \Y(\xbold)} \ubold . \Ftr(\xbold,\h) +  C \cdot \e_r(\h,\hconstr)$
    		\item $\w\leftarrow \w + \Ftr(\xbold,\y) - \Ftr(\xbold,\hat{\y})$
    	\end{description}
\end{description}
\end{minipage}}
\caption{The \emph{online} structured perceptron algorithm.}
\label{fig:sperc}
\end{figure}

Let $\e$ be a loss function defined on $Y$.
We say that $\D$ is $\e$-augmented linearly separable by $\bar{\w}$
		with size $R$,
if and only if
there exist
	a vector $\bar{\w}$ on the feature space,
	with $\left\| \bar{\w }\right\| = 1$,
	and $R > 0$
such that
for each $(\xbold,\y) \in \D$ and $\y' \in Y-\{y\}$
we have
\[
	\begin{array}{ll}
		\textrm{(a)}&	\left\| \Ftr(\xbold,\y) - \Ftr(\xbold,\y') \right\|^2 \leq \e(\y,\y').R^2 \\
		\textrm{(b)}&	\bar{\w} . (\Ftr(\xbold,\y) - \Ftr(\xbold,\y')) \geq \e(\y,\y') \;.
	\end{array}
\]

Condition (b) states that $\D$ is linearly separable 
in the margin re-scaled formulation \citep{tsochantaridis2005,mcAllester2010}.

The next theorem,
    that we state without a proof,
    provides a mistake bound for this learning setup.

\begin{thm}
\label{th:theorem1}
Let $\D=\{(\xbold_i,\y_i)\}_{i=1}^{n}$ be a dataset,
	where each $(\xbold,\y) \in X \times Y$.
If
	$\D$ is $\e$-augmented linearly separable
		with radius $R$,
and $\hat{\y}_i$ is the \emph{online} structured perceptron prediction for $\xbold_i$,
then,
for $t=1,\cdots,n$,
we have that
\[
	\sum_{i=1}^{t}{\e(\y_i,\hat{\y}_i)} \leq R^2.
\]
\end{thm}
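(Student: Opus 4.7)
The plan is to adapt the classical Collins-style structured perceptron argument to the loss-augmented setting, exploiting conditions (a) and (b) in exactly the places where the original proof uses the margin and the feature-norm bound. Let $\w^{(i)}$ denote the weight vector just before processing the $i$-th example, so $\w^{(1)} = \0$, and abbreviate $\Delta_i = \Ftr(\xbold_i,\y_i) - \Ftr(\xbold_i,\hat{\y}_i)$ and $\e_i = \e(\y_i,\hat{\y}_i)$. Whenever $\hat{\y}_i \neq \y_i$ the update reads $\w^{(i+1)} = \w^{(i)} + \Delta_i$; when $\hat{\y}_i = \y_i$ we have $\Delta_i = 0$ and $\e_i = 0$ (assuming $\e(\y,\y)=0$), so the same identity holds vacuously.

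First I would lower-bound $\bar{\w} \cdot \w^{(t+1)}$. Telescoping the update and applying condition (b) to each term gives
\[
    \bar{\w} \cdot \w^{(t+1)} \;=\; \sum_{i=1}^{t} \bar{\w} \cdot \Delta_i \;\geq\; \sum_{i=1}^{t} \e_i .
\]
Next I would upper-bound $\|\w^{(t+1)}\|^2$ by expanding
\[
    \|\w^{(i+1)}\|^2 \;=\; \|\w^{(i)}\|^2 + 2\,\w^{(i)} \cdot \Delta_i + \|\Delta_i\|^2 .
\]
The cross term is non-positive because $\hat{\y}_i$ maximizes $\w^{(i)} \cdot \Ftr(\xbold_i,\cdot)$, so $\w^{(i)} \cdot \Ftr(\xbold_i,\hat{\y}_i) \geq \w^{(i)} \cdot \Ftr(\xbold_i,\y_i)$. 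The squared increment is controlled by condition (a), $\|\Delta_i\|^2 \leq \e_i R^2$. Summing over $i=1,\dots,t$,
\[
    \|\w^{(t+1)}\|^2 \;\leq\; R^2 \sum_{i=1}^{t} \e_i .
\]

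Finally I would combine the two bounds via Cauchy-Schwarz together with $\|\bar{\w}\|=1$:
\[
    \sum_{i=1}^{t} \e_i \;\leq\; \bar{\w} \cdot \w^{(t+1)} \;\leq\; \|\w^{(t+1)}\| \;\leq\; R\sqrt{\textstyle\sum_{i=1}^{t}\e_i}.
\]
Squaring and dividing by $\sum_i \e_i$ (trivially true when the sum is zero) yields $\sum_{i=1}^{t} \e_i \leq R^2$, which is exactly the claim.

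The only subtle point is the cross-term argument: it relies on $\hat{\y}_i$ being the exact maximizer of $\w^{(i)} \cdot \Ftr(\xbold_i,\cdot)$, which is true by construction of the algorithm in Figure \ref{fig:sperc}. Everything else is a mechanical combination of the separability hypotheses with the perceptron identity, so I do not anticipate a real obstacle; the interesting content of the theorem is that the loss-rescaled conditions (a) and (b) are precisely what is needed to turn the standard $\|\w\|^2$ telescoping into a bound on $\sum \e_i$ rather than on the number of mistakes.
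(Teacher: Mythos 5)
Your proof is correct and follows essentially the same route as the paper's own argument: a lower bound on $\bar{\w}\cdot\w^{(t+1)}$ via condition (b), an upper bound on $\|\w^{(t+1)}\|^2$ via the argmax property of $\hat{\y}_i$ and condition (a), and a Cauchy--Schwarz combination followed by squaring. The handling of the $\hat{\y}_i=\y_i$ case and the remark about the cross term are both consistent with the paper's treatment.
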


The generalization of Novikoff's Theorem to the Structured Perceptron \citep{collins2002}
follows from Theorem \ref{th:theorem1},
as stated next.

\begin{crl}
Let $\D=\{(\xbold_i,\y_i)\}_{i=1}^{n}$,
	with each $(\xbold,\y) \in X \times Y$ 
	and $\left\| \Ftr(\xbold,\y) - \Ftr(\xbold,\y') \right\| \leq R$
		for all $(\xbold,\y) \in \D$ and $\y' \in Y-\{y\}$.
Let also
	$\hat{\y}_i$ be the \emph{online} loss sensitive structured perceptron prediction for $\xbold_i$.
If $\D$ is linearly separable with margin $\delta$,
then for the online Structured Perceptron algorithm
\[
	\sum_{i=1}^{t}{\onebold[\y_i \neq \hat{\y}_i]} \leq \frac{R^2}{\delta^2}
\]
where
	$\onebold[\y \neq \hat{\y}]$ is the  \textsl{0-1} loss function.
\end{crl}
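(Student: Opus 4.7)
The plan is to reduce the corollary to Theorem \ref{th:theorem1} by choosing the right loss function so that the $\ell$-augmented separability hypothesis of that theorem is met by the $\bar{\w}$ furnished by the margin $\delta$ separator. The 0-1 loss itself does not quite fit: if $\w^{*}$ is a unit vector with $\w^{*} . (\Ftr(\xbold,\y) - \Ftr(\xbold,\y')) \geq \delta$ for every $\y' \neq \y$, then condition (b) of the theorem would demand this inner product be at least $1 = \onebold[\y \neq \y']$, which is in general false. The natural fix is to rescale the loss by $\delta$.

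Concretely, I would introduce the auxiliary loss $\ell'(\y,\y') = \delta . \onebold[\y \neq \y']$ and verify the two conditions of $\ell'$-augmented linear separability for $\D$ with the candidate separator $\bar{\w} = \w^{*}$. For condition (b), whenever $\y' \neq \y$ we have $\bar{\w} . (\Ftr(\xbold,\y) - \Ftr(\xbold,\y')) \geq \delta = \ell'(\y,\y')$, which is exactly what is required (and it is trivial for $\y' = \y$). For condition (a), I would set the radius as $R' = R/\sqrt{\delta}$; the corollary's assumption $\|\Ftr(\xbold,\y) - \Ftr(\xbold,\y')\| \leq R$ yields $\|\Ftr(\xbold,\y) - \Ftr(\xbold,\y')\|^{2} \leq R^{2} = \delta . (R')^{2} = \ell'(\y,\y') . (R')^{2}$ for the interesting case $\y' \neq \y$, and both sides vanish when $\y' = \y$.

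Having verified the hypotheses of Theorem \ref{th:theorem1} for the loss $\ell'$ and radius $R'$, I would invoke it directly to obtain
\[
    \sum_{i=1}^{t}{\ell'(\y_i,\hat{\y}_i)} \leq (R')^{2} = \frac{R^{2}}{\delta},
\]
and then substitute $\ell'(\y_i,\hat{\y}_i) = \delta . \onebold[\y_i \neq \hat{\y}_i]$ and divide through by $\delta$ to obtain the claimed bound $\sum_{i=1}^{t}{\onebold[\y_i \neq \hat{\y}_i]} \leq R^{2}/\delta^{2}$.

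There is no real obstacle here; the only subtlety is picking the scaling so that conditions (a) and (b) of Theorem \ref{th:theorem1} become simultaneously tight under the hypotheses of the corollary. The $\delta$-rescaled 0-1 loss does this because the linear inequality from the margin and the quadratic inequality from the bounded feature differences both acquire the right factor of $\delta$ after rescaling, and the conversion from $\ell'$ back to counting mistakes is a single division.
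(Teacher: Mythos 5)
Your proof is correct and follows essentially the same route as the paper's: the paper also sets $\e(\y,\hat{\y}) = \delta \cdot \onebold[\y \neq \hat{\y}]$ and invokes Theorem \ref{th:theorem1}, calling the rest ``immediate.'' Your write-up simply makes explicit the radius rescaling $R' = R/\sqrt{\delta}$ and the verification of conditions (a) and (b) that the paper leaves implicit.
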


\iffalse
\begin{proof}
To get the bound,
we set the loss function $\e$ in Theorem \ref{th:theorem1} as $\e(\y,\hat{\y}) = \delta . \onebold[\y \neq \hat{\y}]$.
The result is then immediate.
\end{proof}
\fi

The \emph{Passive-Aggressive} variation of the
online structured perceptron algorithm
\citep{crammer06a}
uses a modified update rule,
given by
\[
	\w\leftarrow \w + \tau{(\xbold,\y)} . \left[ \Ftr(\xbold,\y) - \Ftr(\xbold,\hat{\y}) \right] \;,
\]
where the step size $\tau{(\xbold,\y)}$ is chosen such that
	the correct value $\y$ would be the solution of the $\arg \max$ prediction problem,
	for the corresponding updated value $\w$.
Here,
we adopt a loss sensitive step size,
that is,
\[
	\tau{(\xbold,\y)} =\frac{ \w . \Ftr(\xbold,\hat{\y} )-\w . \Ftr(\xbold,\y)  +  \e(\y,\hat{\y}) }
											    { \|\Ftr(\xbold,\y) - \Ftr(\xbold,\hat{\y})\|^2 } \;.
\]

\iffalse
In Figure \ref{fig:agressive},
we outline the \emph{online} loss sensitive Passive-Aggressive structured perceptron algorithm.
\begin{figure}[ht]
\small
\center
\fbox{\begin{minipage}{0.6\textwidth}
\begin{description} \itemsep0pt \parskip0pt \parsep0pt \itemindent-0.9cm
  \item $\w \leftarrow \0$
    \item \textbf{for each new} $(\xbold,\y) \in \D$
    \begin{description} \itemindent-1.2cm
    		\item $\hat{\y} \leftarrow \arg\max_{\y \in \Y(\xbold)} \w . \Ftr(\xbold,\y) $
    		\item $\tau \leftarrow \frac{\w . \Ftr(\xbold,\hat{\y} )-\w . \Ftr(\xbold,\y)  +  \e(\y,\hat{\y})}
																		{\|\Ftr(\xbold,\y) - \Ftr(\xbold,\hat{\y})\|^2} $
    		\item $\w\leftarrow \w + \tau . \left[ \Ftr(\xbold,\y) - \Ftr(\xbold,\hat{\y}) \right]$
    	\end{description}
\end{description}
\end{minipage}}
\caption{The \emph{online} loss sensitive Passive-Aggressive structured perceptron algorithm.}
\label{fig:agressive}
\end{figure}
\fi

The following theorem,
that we state without a proof,
provides the mistake bound for the loss sensitive Passive-Aggressive variation.

\begin{thm}
\label{th:theorem2}
Let $\D=\{(\xbold_i,\y_i)\}_{i=1}^{n}$ be a $\e$-augmented linearly separable dataset,
		with radius $R$,
Let also
	$\hat{\y}_i$ be the \emph{online} loss sensitive Passive-Aggressive structured perceptron prediction for $\xbold_i$.
Then,
for $t=1,\cdots,n$,
we have that
\[
	\sum_{i=1}^{t}{\e^2(\y_i,\hat{\y}_i)} \leq R^2.
\]
\end{thm}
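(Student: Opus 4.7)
The plan is to imitate the Novikoff-style analysis underlying Theorem~\ref{th:theorem1}: lower-bound $\bar{\w}\cdot\w_t$, upper-bound $\|\w_t\|^2$, close via Cauchy--Schwarz, and finally translate the resulting scalar inequality into the $\sum \e^2(\y_i,\hat{\y}_i)$ bound through condition~(a). The twist here is that the adaptive step size $\tau_i := \tau(\xbold_i,\y_i)$ must be threaded through both bounds. Write $\Delta_i := \Ftr(\xbold_i,\y_i) - \Ftr(\xbold_i,\hat{\y}_i)$ and $g_i := \w_{i-1}\cdot\Delta_i$, and abbreviate $\e_i := \e(\y_i,\hat{\y}_i)$. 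Since $\hat{\y}_i$ achieves the $\arg\max$ under $\w_{i-1}$ we have $g_i \leq 0$, and hence $\tau_i \geq 0$; the pivotal identity throughout is $\tau_i \|\Delta_i\|^2 = \e_i - g_i$, which comes straight from the definition of the step size.

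First, I would telescope to obtain $\bar{\w}\cdot\w_t \geq \sum_{i=1}^{t} \tau_i \e_i$. Each update contributes $\bar{\w}\cdot\w_i - \bar{\w}\cdot\w_{i-1} = \tau_i\, \bar{\w}\cdot\Delta_i \geq \tau_i \e_i$ by condition~(b) and $\tau_i \geq 0$, and the recursion starts from $\w_0 = \0$.

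Second, and this is the decisive calculation, I would show $\|\w_t\|^2 \leq \sum_{i=1}^{t} \tau_i \e_i$. Expanding the update gives $\|\w_i\|^2 - \|\w_{i-1}\|^2 = 2\tau_i g_i + \tau_i^2 \|\Delta_i\|^2$; substituting $\tau_i^2\|\Delta_i\|^2 = \tau_i(\e_i - g_i)$ from the PA identity collapses the right-hand side to $\tau_i(\e_i + g_i)$, which is at most $\tau_i \e_i$ because $g_i \leq 0$. The pivotal feature here is that the adaptive step size is precisely what makes the quadratic $\tau_i^2 \|\Delta_i\|^2$ term collapse into the same $\sum \tau_i \e_i$ quantity already bounding $\bar{\w}\cdot\w_t$ from below.

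Cauchy--Schwarz combined with $\|\bar{\w}\| = 1$ then yields $\sum_{i=1}^{t} \tau_i \e_i \leq \bar{\w}\cdot\w_t \leq \|\w_t\| \leq \sqrt{\sum_{i=1}^{t} \tau_i \e_i}$, forcing $\sum_{i=1}^{t} \tau_i \e_i \leq 1$. The final move extracts $\e_i^2$: because $-g_i \geq 0$, $\tau_i \e_i = \e_i(\e_i - g_i)/\|\Delta_i\|^2 \geq \e_i^2/\|\Delta_i\|^2$, and condition~(a) applied to the remaining $\|\Delta_i\|^2$ term delivers the claimed $\sum_{i=1}^{t} \e^2(\y_i,\hat{\y}_i) \leq R^2$. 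The main obstacle will be the second step: without noticing that the PA choice of $\tau_i$ forces $\tau_i^2\|\Delta_i\|^2 = \tau_i(\e_i - g_i)$, the squared-norm recursion would not telescope into the same $\sum \tau_i \e_i$ quantity that lower-bounds $\bar{\w}\cdot\w_t$, and the closing Cauchy--Schwarz argument would fail to produce a self-contained scalar inequality in $\sum \tau_i \e_i$ alone.
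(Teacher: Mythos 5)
Your proof is correct, and every key ingredient matches the paper's: the PA identity $\tau_i\|\Delta_i\|^2=\e_i-g_i$, condition (b) applied to the separator, $g_i\le 0$ from the $\arg\max$, and the final conversion $\tau_i\e_i\ge \e_i^2/\|\Delta_i\|^2\ge \e_i^2/R^2$. The packaging, however, is different. The paper runs a single telescoping potential argument on $\|\w_i-\bar{\w}\|^2$: the telescoped sum is bounded above by $\|\w_1-\bar{\w}\|^2=1$, and each increment is bounded below by $\tau_i(\e_i-g_i)=(\e_i-g_i)^2/\|\Delta_i\|^2\ge \e_i^2/R^2$, so the per-round contribution is controlled directly without ever isolating $\sum_i\tau_i\e_i$. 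You instead split into the two classical Novikoff bounds, $\bar{\w}\cdot\w_t\ge\sum_i\tau_i\e_i$ and $\|\w_t\|^2\le\sum_i\tau_i\e_i$, and close with Cauchy--Schwarz to get $\sum_i\tau_i\e_i\le 1$ before converting. The two routes are algebraically equivalent --- expanding $\|\w-\bar{\w}\|^2=\|\w\|^2-2\,\bar{\w}\cdot\w+1\ge 0$ turns the paper's potential into exactly your pair of bounds, with the nonnegativity of the squared norm playing the role of your Cauchy--Schwarz step --- but your version makes the intermediate quantity $\sum_i\tau_i\e_i\le 1$ explicit, which is a mildly more informative statement, while the paper's is a few lines shorter. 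One caveat that affects both proofs equally: the last inequality needs the plain radius bound $\|\Delta_i\|^2\le R^2$, whereas the definition's condition (a) literally gives $\|\Delta_i\|^2\le \e_i R^2$; the paper itself uses the plain bound (consistently with its Corollary hypotheses), so your appeal to ``condition (a)'' should be read as that radius bound rather than the loss-scaled one.
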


The Passive-Aggressive structured perceptron mistake bound,
for a linearly separable dataset by a known margin,
follows from Theorem \ref{th:theorem2},
as stated next.

\begin{crl}
\label{crl:passive_aggressive_bound}
Let $\D=\{(\xbold_i,\y_i)\}_{i=1}^{n}$,
	with each $(\xbold,\y) \in X \times Y$ 
	and $\left\| \Ftr(\xbold,\y) - \Ftr(\xbold,\y') \right\| \leq R$
		for all $(\xbold,\y) \in \D$ and $\y' \in Y-\{y\}$.
Let also
	$\hat{\y}_i$ be the \emph{online} loss sensitive Passive-Aggressive structured perceptron prediction for $\xbold_i$.
If
	$\D$ is linearly separable with margin $\delta$
	and this value is known,
then
	for the online Passive-Aggressive structured perceptron algorithm
	\begin{equation}
		\label{eq:paMistakeBound}
		\sum_{i=1}^{t}{\onebold[\y_i \neq \hat{\y}_i]} \leq \frac{R^2}{\delta^4}
	\end{equation}
where
	$\onebold[\y \neq \hat{\y}]$ is the  \textsl{0-1} loss function.
\end{crl}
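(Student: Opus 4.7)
The plan is to deduce this corollary from Theorem~\ref{th:theorem2} by choosing a loss function that encodes the known margin $\delta$, in direct analogy with how the standard Novikoff-style corollary was obtained from Theorem~\ref{th:theorem1}. The idea is to pick $\e$ so that the hypotheses of Theorem~\ref{th:theorem2} become exactly the linearly-separable-with-known-margin setup, and then translate the resulting $\sum \e^2$ bound into a count of $0$--$1$ mistakes.

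First I would set
\[
    \e(\y,\hat{\y}) \;=\; \delta \cdot \onebold[\y \neq \hat{\y}],
\]
which is precisely the loss implicitly plugged into the Passive-Aggressive step size once $\delta$ is known (it is what drives the $\sign(y-\hat{y})\cdot \delta$ term in the \emph{CuSum Rank} algorithm of Figure~\ref{fig:passive_agressive_cusumrank}, via the generic PA step size defined in the previous section). With this choice, the unit-norm separator $\bar{\w}$ guaranteed by the hypothesis immediately gives condition (b) of $\e$-augmented separability, since $\bar{\w}\cdot(\Ftr(\xbold,\y)-\Ftr(\xbold,\y')) \geq \delta = \e(\y,\y')$ whenever $\y\neq\y'$.

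Next I would verify condition (a) using the hypothesized Lipschitz-type bound $\|\Ftr(\xbold,\y)-\Ftr(\xbold,\y')\|\leq R$. Writing
\[
    \|\Ftr(\xbold,\y)-\Ftr(\xbold,\y')\|^2 \;\leq\; R^2 \;=\; \e(\y,\y')\cdot \frac{R^2}{\delta},
\]
I would identify $\D$ as $\e$-augmented linearly separable with radius $\tilde R$ whose square equals $R^2/\delta$ (or any larger value if a slacker bound is convenient). Theorem~\ref{th:theorem2} then applies verbatim, yielding $\sum_{i=1}^{t}\e^2(\y_i,\hat{\y}_i) \leq \tilde R^{\,2}$. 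Substituting $\e^2(\y_i,\hat{\y}_i)=\delta^2\,\onebold[\y_i\neq\hat{\y}_i]$ on the left and $\tilde R^{\,2}$ on the right, the mistake bound~(\ref{eq:paMistakeBound}) comes out after dividing through by the leftover powers of $\delta$.

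The main obstacle, and the step to watch most carefully, is the bookkeeping of powers of $\delta$: the augmented radius $\tilde R$ absorbs one factor of $\delta$ through condition (a), while the loss itself contributes $\delta^2$ through $\e^2$ when converting back to $\onebold[\y\neq\hat{\y}]$, and an additional factor enters through the assumption that $\delta$ is genuinely the margin used in the PA update. Everything else is a direct substitution into Theorem~\ref{th:theorem2}; there is no new convergence argument to make, only a careful choice of $\e$ so that the known-margin hypothesis of the corollary matches the augmented-separability hypothesis of the theorem.
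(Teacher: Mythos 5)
Your route is the same as the paper's: set $\e(\y,\y') = \delta\cdot\onebold[\y\neq\y']$, verify the two conditions of $\e$-augmented linear separability, invoke Theorem~\ref{th:theorem2}, and translate $\sum_i \e^2(\y_i,\hat{\y}_i)$ back into a mistake count. Condition (b) is fine. The gap is in the final accounting of powers of $\delta$, which you yourself flag as the delicate step but do not actually close. Taking your computation at face value, condition (a) holds with $\tilde R^2 = R^2/\delta$, so Theorem~\ref{th:theorem2} yields $\delta^2\sum_{i=1}^{t}\onebold[\y_i\neq\hat{\y}_i] = \sum_{i=1}^{t}\e^2(\y_i,\hat{\y}_i) \le R^2/\delta$, that is $\sum_{i=1}^{t}\onebold[\y_i\neq\hat{\y}_i] \le R^2/\delta^3$. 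That is not (\ref{eq:paMistakeBound}); it implies (\ref{eq:paMistakeBound}) only when $\delta\le 1$, which the hypotheses do not guarantee. The ``additional factor'' of $\delta$ that you say enters ``through the assumption that $\delta$ is genuinely the margin used in the PA update'' does not exist: Theorem~\ref{th:theorem2} already covers the loss-sensitive PA update with this choice of $\e$, and neither the step size $\tau$ nor the update rule contributes any further power of $\delta$ to the bound. That sentence is an assertion, not a derivation, and it is exactly where the claimed fourth power of $\delta$ is supposed to come from.

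The paper closes the count differently: it reads the augmented radius off the linear relation $\left\|\Ftr(\xbold,\y)-\Ftr(\xbold,\y')\right\| \le \onebold[\y\neq\y']\cdot R = \e(\y,\y')\cdot R/\delta$, i.e.\ it takes the radius to be $R/\delta$ (radius squared $R^2/\delta^2$), after which $\sum_i\e^2(\y_i,\hat{\y}_i) \le R^2/\delta^2$ combined with $\e^2(\y,\y') = \delta^2\cdot\onebold[\y\neq\y']$ gives exactly $R^2/\delta^4$. To make your argument deliver the stated bound you must either adopt that reading of the radius explicitly, or keep your $\tilde R^2 = R^2/\delta$ and acknowledge that you have proved the different bound $R^2/\delta^3$, supplying whatever additional hypothesis is needed to pass from it to (\ref{eq:paMistakeBound}). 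As written, the last step of your proposal is a hand-wave rather than a proof.
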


\iffalse
\begin{proof}
To get the bound,
we set the loss function $\e$ in the loss sensitive Passive-Aggressive algorithm  as
\[
	\e(\y,\y') = \delta . \onebold[\y \neq \y'] \;.
\]
The result is then immediate from Theorem \ref{th:theorem2},
since $\e^{2}(\y,\y') = \delta^{2} . \onebold[\y \neq \y']$
and
\[
	\left\| \Ftr(\xbold,\y) - \Ftr(\xbold,\y') \right\| \leq \onebold[\y \neq \y'] . R
	= \e(\y,\y') . \frac{R}{\delta} \;.
\]
\end{proof}
\fi

Although (\ref{eq:paMistakeBound}) is a very attractive mistake bound,
the corresponding algorithm modification is not of direct implementation,
since it requires prior knowledge of a separation margin $\delta$,
what is not usually found in practice.
The online structured perceptron algorithm
is a learning framework with four hot spots,
namely:
\iffalse
	\emph{(i)}   the $\w$ parameter vector,
	\emph{(ii)}  the set $\Y(\xbold)$ of feasible output structures,
	\emph{(iii)} the feature map $\Ftr(\xbold,\y)$,
	and \emph{(iv)} the algorithm that solves the $\arg\max$ combinatorial problem.
\fi
    $\w$,
    $\Y(\xbold)$,
    $\Ftr(\xbold,\y)$,
    and the $\arg\max$ solver.
Next,
	we show how to instantiate each one of these four hot-spots
		to obtain ordinal regression predictors and their properties.

\section{\emph{CuSum Rank} as a Structured Perceptron}
\label{sec:cusumrank}

Our goal is
	to build the predictor $\hat{y}$ defined by (\ref{eq:cusumscore}),
	by learning its underlying parameters $\w_1,\cdots,\w_r$
	    from a dataset $\D$.
For that sake,
	we instantiate the four hot spots of the Structured Perceptron framework,
	to show how we get the \emph{CuSum Rank} learning algorithm from it.
	
The first hot spot is
	the parameter vector $\w$,
that we define as
    $\w = (\w_1,\cdots,\w_r)$.
The second hot spot is
	the set $\Y(\xbold)$ of feasible output structures,
	which is simply $\{1,\cdots,r\}$.
The third hot spot is
	the feature map.
For $y =1,\cdots,r$,
we define the $r.d$-dimensional feature map $\Ftr(\xbold,y)$ as
    $\Ftr(\xbold,y) =
	            (\xbold,\cdots,\xbold,\0_{(r-y).d})$,
where
	$\0_{(r-y).d}$ is a $(r-y).d$-dimensional vector of zeros.
	
Since we have
	the parameter $\w$
	and the feature map $\Ftr(\xbold,y)$,
we can compute the score function as
\begin{equation}
\label{eq:cusumScore}
	\w . \Ftr(\xbold,y) = \sum_{j=1}^{y}{\w_j.\xbold} \;.
\end{equation}
The fourth hot spot is
	the $\arg \max$ problem,
	that provides the associated structured predictor as
\[
	\hat{y} = \arg\max_{k =1,\cdots, r}
										{ \sum_{j=1}^{k}{\w_j . \xbold} }\;,
\]
which is the same predictor defined by (\ref{eq:cusumscore}).
This is a trivial maximization problem since $r$ is fixed and small.

Observing that
$\Ftr(\xbold,y) - \Ftr(\xbold,\hat{y}) =
	\sign(y-\hat{y}) .
	(\0_{\min(y,\hat{y}).d},\xbold,\cdots,\xbold,\0_{(r-\max(y,\hat{y})).d})$,
we get the perceptron update rule as
\[
\w \leftarrow \w
    + \sign(y-\hat{y}) .
      (\0_{\min(y,\hat{y}).d},\xbold,\cdots,\xbold,\0_{(r-\max(y,\hat{y})).d}),
\]
that is,
\[
	\w_k \leftarrow \w_k + \sign(y-\hat{y}).\xbold,
									\ \ \ \ \textrm{for}\ \ \ \  k=\min(y,\hat{y})+1,\cdots,\max(y,\hat{y}).
\]
It is interesting to note that this rule never updates $w_1$,
which keeps its initial value.
This final comment completes our development of the \emph{CuSum Rank} algorithm
from the Structured Perceptron framework.
In Figure \ref{fig:cusumrank},
we outline this algorithm
	for online learning of the required predictor parameters.

\iffalse
It is also worth to note that
\begin{equation}
	\label{eq:cuSumRankUpperBound}
	\left\| \Ftr(\xbold,y) - \Ftr(\xbold,\hat{y}) \right\| ^2 =
								|y - \hat{y}| . \left\| \xbold \right\|^2
\end{equation}
and
\[
	\w. ( \Ftr(\xbold,y) - \Ftr(\xbold,\hat{y}) ) =								
				\sum_{j=\min(y,\hat{y})+1}^{\max(y,\hat{y})}{\sign(y-\hat{y}).\w_j.\xbold}
\]

Moreover,
if the dataset $\D$ is rank linearly separable by $\w_1,\cdots,\w_r$
with margin $\delta$,
we get that
%\[
	%\w. ( \Ftr(\xbold,y) - \Ftr(\xbold,\hat{y}) ) \geq								
				%\sum_{j=\min(y,\hat{y})+1}^{\max(y,\hat{y})}{\sign(y-\hat{y})^2 .  \delta}	
%\]
%that is,
\begin{equation}
	\label{eq:cuSumRankLowerBound}
	\w. ( \Ftr(\xbold,y) - \Ftr(\xbold,\hat{y}) ) \geq |y - \hat{y}| . \delta
\end{equation}
for all $(\xbold,y) \in \D$.
\fi

Next,
	we derive the \emph{CuSum Rank} mistake bound
	also as a consequence of Theorem \ref{th:theorem1}.

\begin{crl}
Let $\D=\{(\xbold_i,y_i)\}_{i=1}^{n}$ be a dataset of ranked items,
	where
		each $\xbold \in \mathbb{R}^{d}$,
		with $\left\| \xbold \right\| \leq R$,
		$x_d = -1$ for all $(\xbold,y) \in \D$
		and each $y$ is a rank value in the set $Y = \{ 1, \cdots,r \}$.
If $\D$ is rank linearly separable by $\w_1,\cdots, \w_r$ with margin $\delta$
then,
for $t=1,\cdots,n$,
we have that
\[
	\sum_{i=1}^{t}{|y_i - \hat{y}_i|} \leq \frac{R^2}{\delta^2},
\]
where $\hat{y}_i$ is the \emph{CuSum Rank} prediction for $\xbold_i$.
\end{crl}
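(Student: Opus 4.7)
The plan is to reduce the claim to Theorem~\ref{th:theorem1} via the Structured Perceptron instantiation of \emph{CuSum Rank} just developed. In that instantiation $\Ftr(\xbold,y) = (\xbold,\ldots,\xbold,\0_{(r-y)d})$ with $y$ copies of $\xbold$, and the $\arg\max$ predictor coincides with (\ref{eq:cusumscore}). I would take as the separator the unit vector $\bar{\w} = (\w_1,\ldots,\w_r)$ granted by rank linear separability, and choose the loss $\e(y,\hat{y}) = \delta\cdot|y-\hat{y}|$ so that the $R'^2$ bound of Theorem~\ref{th:theorem1} translates directly into the desired $R^2/\delta^2$ bound on $\sum_i|y_i-\hat{y}_i|$.

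First, I would compute
\[
\Ftr(\xbold,y)-\Ftr(\xbold,\hat{y}) = \sign(y-\hat{y})\cdot(\0_{\min(y,\hat{y})d},\,\xbold,\ldots,\xbold,\,\0_{(r-\max(y,\hat{y}))d}),
\]
with $|y-\hat{y}|$ blocks equal to $\xbold$. This yields $\|\Ftr(\xbold,y)-\Ftr(\xbold,\hat{y})\|^2 = |y-\hat{y}|\cdot\|\xbold\|^2 \leq |y-\hat{y}|\cdot R^2$. Choosing the effective radius $R' = R/\sqrt{\delta}$, condition~(a) of the $\e$-augmented separability definition is met with equality, since $\e(y,\hat{y})\cdot R'^2 = \delta|y-\hat{y}|\cdot R^2/\delta = |y-\hat{y}|\cdot R^2$.

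Condition~(b) is where rank linear separability (\ref{eq:separability}) enters. For each index $k$ in the range $\min(y,\hat{y})+1,\ldots,\max(y,\hat{y})$, a short case split on whether $y>\hat{y}$ or $y<\hat{y}$ shows that $\sign(y-k) = \sign(y-\hat{y})$, so the separability inequality gives $\sign(y-\hat{y})\cdot\w_k\cdot\xbold \geq \delta$. Summing over these indices yields
\[
\bar{\w}\cdot(\Ftr(\xbold,y)-\Ftr(\xbold,\hat{y})) = \sign(y-\hat{y})\sum_{k=\min(y,\hat{y})+1}^{\max(y,\hat{y})}\w_k\cdot\xbold \geq |y-\hat{y}|\cdot\delta = \e(y,\hat{y}).
\]
Invoking Theorem~\ref{th:theorem1} with this loss and radius $R'$ then produces $\delta\sum_i|y_i-\hat{y}_i|\leq R'^2 = R^2/\delta$, and dividing by $\delta$ closes the argument.

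The main obstacle is the sign alignment underlying condition~(b): one must verify that across the $|y-\hat{y}|$ perceptrons touched by the structured update, the per-perceptron sign $\sign(y-k)$ from the rank separability assumption is constant and equal to the global sign $\sign(y-\hat{y})$, so that the individual $\delta$ margins all add constructively rather than cancelling. Once this case analysis is in hand, the reduction to Theorem~\ref{th:theorem1} is a direct substitution.
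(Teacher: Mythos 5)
Your proposal is correct and follows essentially the same route as the paper: it sets $\e(y,y')=\delta\,|y-y'|$, verifies the two conditions of $\e$-augmented linear separability for the \emph{CuSum Rank} feature map (the norm identity $\|\Ftr(\xbold,y)-\Ftr(\xbold,\hat{y})\|^2=|y-\hat{y}|\,\|\xbold\|^2$ and the margin inequality $\bar{\w}\cdot(\Ftr(\xbold,y)-\Ftr(\xbold,\hat{y}))\geq|y-\hat{y}|\,\delta$ obtained from the sign alignment), and then invokes Theorem~\ref{th:theorem1} with effective radius $R/\sqrt{\delta}$. Your write-up is in fact slightly more explicit than the paper's about the choice of radius and about why $\sign(y-k)=\sign(y-\hat{y})$ for every $k$ between $\min(y,\hat{y})+1$ and $\max(y,\hat{y})$.
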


\iffalse
\begin{proof}
Let
	$\e(y,y')=\delta . |y-y'|$ be the loss function defined on $Y$.
Since $\D$ is rank linearly separable by $\w_1,\cdots, \w_r$
with margin $\delta$,
it follows from equations (\ref{eq:cuSumRankUpperBound}) and (\ref{eq:cuSumRankLowerBound}) that
$\D$ is $\e$-augmented linearly separable,
with radius $R$.
From these facts,
the proposed mistake bound results as an immediate consequence of Theorem \ref{th:theorem1}.
\end{proof}
\fi

The above mistake bound is sharper than the \emph{Prank} mistake bound by
a factor of $(r-1)$.
This is not surprising,
	since \emph{CuSum Rank} has more free parameters than \emph{Prank}.
There is also a sharper mistake bound for the Passive-Aggressive \emph{CuSum Rank} algorithm,
that we state next.

\begin{crl}
Let $\D=\{(\xbold_i,y_i)\}_{i=1}^{n}$ be a dataset of ranked items,
	where
		each $\xbold \in \mathbb{R}^{d}$,
		with $\left\| \xbold \right\| \leq R$,
		$x_d = -1$ for all $(\xbold,y) \in \D$
		and each $y$ is a rank value in the set $Y = \{ 1, \cdots,r \}$.
If $\D$ is rank linearly separable by $\w_1,\cdots, \w_r$ with a known margin $\delta$
then,
for $t=1,\cdots,n$,
we have that
\[
	\sum_{i=1}^{t}{|y_i - \hat{y}_i|} \leq \frac{R^2}{\delta^4},
\]
where $\hat{y}_i$ is the online Passive-Aggressive \emph{CuSum Rank} prediction for $\xbold_i$.
\end{crl}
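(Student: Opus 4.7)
The plan is to derive the bound as an immediate consequence of Theorem~\ref{th:theorem2} applied to the Structured Perceptron instantiation of \emph{CuSum Rank} from Section~\ref{sec:cusumrank}. I would first choose the rank-aware loss $\e(y,y') = \delta \cdot |y-y'|$, a natural analog of the $\delta \cdot \indicator{y \neq y'}$ choice used in Corollary~\ref{crl:passive_aggressive_bound}, and then verify that $\D$ is $\e$-augmented linearly separable with a suitable radius.

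For condition (b), rank linear separability by $\w_1,\dots,\w_r$ with margin $\delta$ and $\|\w\|=1$ gives, via the telescoping identity for the \emph{CuSum Rank} feature map derived in Section~\ref{sec:cusumrank}, that $\w \cdot (\Ftr(\xbold,y) - \Ftr(\xbold,y')) = \sign(y-y') \sum_{j=\min(y,y')+1}^{\max(y,y')} \w_j \cdot \xbold \geq |y-y'|\,\delta = \e(y,y')$. For condition (a), I would start from $\|\Ftr(\xbold,y)-\Ftr(\xbold,y')\|^2 = |y-y'| \cdot \|\xbold\|^2$ and invoke the integer-valued inequality $|y-y'| \leq (y-y')^2$ (valid since rank differences are non-negative integers) together with $\|\xbold\|\leq R$ to obtain $\|\Ftr(\xbold,y)-\Ftr(\xbold,y')\|^2 \leq \e^2(y,y') \cdot (R/\delta)^2$. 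This identifies the $\e$-augmented separability radius as $R/\delta$.

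Applying Theorem~\ref{th:theorem2} then yields $\sum_i \e^2(y_i,\hat{y}_i) \leq R^2/\delta^2$, i.e. $\sum_i (y_i-\hat{y}_i)^2 \leq R^2/\delta^4$. A final application of the same integer-valued inequality $|y_i-\hat{y}_i| \leq (y_i-\hat{y}_i)^2$ delivers the claimed $\sum_i |y_i-\hat{y}_i| \leq R^2/\delta^4$.

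The main obstacle I anticipate is reconciling the scaling of the feature-map difference with condition (a) of $\e$-augmented separability: $\|\Ftr(\xbold,y)-\Ftr(\xbold,y')\|^2$ grows only linearly in $|y-y'|$, whereas the template used in Corollary~\ref{crl:passive_aggressive_bound} treats the right-hand side quadratically in $\e$. The discrete nature of the rank set provides exactly the slack $|y-y'| \leq (y-y')^2$ needed to upgrade the linear bound to the required form, keeping any dependence on $r$ out of the final answer and mirroring the $\delta^4$ exponent of the general Passive-Aggressive structured perceptron bound.
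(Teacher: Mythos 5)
Your derivation is essentially sound and it reaches the stated bound, but it takes a genuinely different route from the paper's. The paper's proof is a one-line appeal to Corollary~\ref{crl:passive_aggressive_bound}, i.e.\ to the \textsl{0-1}-loss Passive-Aggressive bound with $\e(y,y')=\delta\cdot\onebold[y\neq y']$ (which is also the loss that the step size $\rho$ in Figure~\ref{fig:passive_agressive_cusumrank} actually implements). You instead instantiate Theorem~\ref{th:theorem2} directly with the rank-aware loss $\e(y,y')=\delta\,|y-y'|$, verify the margin condition via the telescoping identity, handle the radius through $|y-y'|\le (y-y')^2$, and finish with the same inequality to pass from $\sum_i (y_i-\hat y_i)^2$ to $\sum_i |y_i-\hat y_i|$. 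Your route is arguably the more defensible one: a literal application of Corollary~\ref{crl:passive_aggressive_bound} to the CuSum feature map only gives $\left\|\Ftr(\xbold,y)-\Ftr(\xbold,y')\right\|^2=|y-y'|\cdot\|\xbold\|^2\le (r-1)R^2$, so it yields $\sum_i\onebold[y_i\neq\hat y_i]\le (r-1)R^2/\delta^4$ and hence $\sum_i|y_i-\hat y_i|\le (r-1)^2R^2/\delta^4$, weaker than the claim by a factor of $(r-1)^2$; your direct argument avoids both factors and in fact proves the stronger statement $\sum_i(y_i-\hat y_i)^2\le R^2/\delta^4$.

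Two caveats. First, the loss you choose changes the algorithm being analyzed: with $\e=\delta|y-y'|$ the Passive-Aggressive step has numerator $(y-\hat y)\cdot\delta-\bar{\w}\cdot\xbold$, whereas Figure~\ref{fig:passive_agressive_cusumrank} uses $\sign(y-\hat y)\cdot\delta-\bar{\w}\cdot\xbold$; strictly speaking your bound applies to that loss-sensitive variant rather than verbatim to the algorithm as printed. Second, your verification of condition (a) produces a bound quadratic in $\e$, namely $\left\|\Ftr(\xbold,y)-\Ftr(\xbold,y')\right\|^2\le\e^2(y,y')\cdot R^2/\delta^2$, while the definition in Section~\ref{sec:structured} asks for a bound linear in $\e$; this mirrors how the paper itself uses the condition in Corollary~\ref{crl:passive_aggressive_bound}, but if one insists on the literal definition the radius is $R/\sqrt{\delta}$ and Theorem~\ref{th:theorem2} returns $\sum_i(y_i-\hat y_i)^2\le R^2/\delta^3$, which recovers the stated $R^2/\delta^4$ only when $\delta\le 1$. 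Neither caveat is fatal, and both trace back to ambiguities in the paper rather than to your argument, but they are worth flagging explicitly.
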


\iffalse
\begin{proof}
Immediate from Corollary \ref{crl:passive_aggressive_bound}.
\end{proof}
\fi
\section{\emph{Prank} as a Structured Perceptron}
\label{sec:ranking}
\iffalse
In the Ordinal Regression problem,
we are given a set $\D$ of items,
	each one described by a pair $(\xbold,y)$.
This pair is composed by
	$d$ numeric attributes $\xbold$ in $\mathbb{R}^d$
	and an integer $y$ in the rank set $Y = \{1,\cdots,r\}$ representing the item's rank.
Without loss of generality,
we assume that the last attribute as a constant value equal to $-1$,
that is,
$x_d = -1$ for all items.
This assumption helps to simplify notation when constructing linear discriminative models,
what would be the case here.
\fi

Ranking by Projecting uses an ensemble of interdependent binary perceptrons that
share the same direction vector $\ubold$,
but use different bias values
	$b_1 \leq b_2 \leq \cdots \leq b_r$,
	that are used to split the ranking classes along the direction $\ubold$,
    with the first threshold fixed as $b_{1}= -\infty$.
Hence,
the binary perceptron parameters can be written as $\w_y = (\ubold,b_y)$.
We restate their predictor as
\begin{equation}
\label{eq:prankPredictor}
	\hat{y} = \max_{y \in Y, \w_y.\xbold \geq 0}{y}\;.
\end{equation}
The predictor parameters can be learned online
by the \emph{PRank} algorithm.
Now,
let us cast \emph{PRank} as an online structured perceptron algorithm.
Since $\w_y = (\ubold,b_y)$,
	we also split $\xbold$ as $\xbold = (\zbold,  -1)$,
	where $\zbold$ is a $(d-1)$-dimensional vector.
Hence,
	the cumulative score function given by (\ref{eq:cusumScore}) simplifies to
\[
	\w . \Ftr(\xbold,y) = \sum_{j=1}^{y}{\w_j.\xbold} 
											= y.\ubold.\zbold - \sum_{k=1}^{y}{b_k}
											= \sum_{k=1}^{y}{(\ubold.\zbold -b_k)}.
\]

From this equation,
we derive
	the associated parameter vector $\w$
	and the underlying feature map $\Ftr$,
respectively as
        $\w = (\ubold,\bbold)$,
where
	$\ubold$ represents the \emph{PRank} projection direction, 
	$\bbold$ consists of the $r$ ordered thresholds $b_1 \leq b_2 \leq \cdots \leq b_r$,
		with the first threshold fixed as $b_{1} = - \infty$,
and as
\iffalse
Additionally,
the third hot spot must be set as follows.
For $y =1,\cdots,r$,
we define the feature map $\Ftr(\xbold,y)$ as
\fi
        $\Ftr(\xbold,y) = (y.\zbold,-\1{y},\0_{r-y})$,
where
	$\1{y}$ is a $y$-dimensional vector of ones
	$\0_{r-y}$ is a $(r-y)$-dimensional vector of zeros.
\iffalse
The second hot spot is the set $\Y(\xbold)$ of feasible output structures,
	which is simply $\{1,\cdots,r\}$.
To set the fourth hot spot,
	we observe that
\fi
Now,
observe that
$
	\w . \Ftr(\xbold,1) = \ubold.\zbold - b_1 > 0,
$
since $b_1 = - \infty$.
For $y =2,\cdots,r$,
we have
$
	\w . \Ftr(\xbold,y) - \w . \Ftr(\xbold,y-1)= \ubold.\zbold - b_y
$
and so
\[
	\w . \Ftr(\xbold,y) \geq \w . \Ftr(\xbold,y-1)
													\ \textrm{if and only if}\  \ubold.\zbold \geq b_y.
\]
Since
	$-\infty = b_1 \leq b_2 \leq \cdots \leq b_r$,
we obtain that 
\begin{equation}
\label{eq:argmax}
	\arg\max_{y\in \Y, \ubold.\zbold \geq b_y} {\w . \Ftr(\zbold,y)}
																= 
							\max_{y\in \Y, \ubold.\xbold \geq b_y} {y}.
\end{equation}

\iffalse
that is,
the $\arg\max$ is just a search problem in the ordered list $b_1\leq\cdots\leq b_r$.
This is trivially solved in $O(\lg(r))$ time
	through a binary search.
\fi

Observing that
$\Ftr(\xbold,y) - \Ftr(\xbold,\hat{y}) =
	((y-\hat{y}).\zbold,\0_{\min(y,\hat{y})},-\sign(y-\hat{y}).\1{|y-\hat{y}|},\0_{r-\max(y,\hat{y})})$,
we get the perceptron update rule as
\iffalse
\[
	\w \leftarrow \w
	+ ((y-\hat{y}).\zbold,\0_{\min(y,\hat{y})},-\sign(y-\hat{y}).\1{|y-\hat{y}|},\0_{r-\max(y,\hat{y})}),
\]
that is,
\fi
\begin{equation}
\label{eq:w_update}
	\ubold \leftarrow \ubold + (y-\hat{y}).\zbold
\end{equation}
and
\begin{equation}
\label{eq:a_update}
	b_k \leftarrow b_k -\sign(y-\hat{y}),
									\ \ \ \ \textrm{for}\ \ \ \  k=\min(y,\hat{y})+1,\cdots,\max(y,\hat{y}).
\end{equation}
It is interesting to note that this rule never updates $b_1$,
which keeps its initial value.

By applying equations (\ref{eq:argmax}),(\ref{eq:w_update}) and (\ref{eq:a_update})
to the online structured perceptron learning framework,
we obtain the \emph{PRank} algorithm.
%shown in Figure \ref{fig:prank}.
Its corresponding mistake bound,
    introduced by \cite{crammer2005},
    is an immediate consequence of Theorem 1.

\section{Structured Kernel}
% Why Linear separability is important
The proof of convergence for the \emph{CuSum Rank} depends on 
    the linear separability of the input features.
One method to explore non-linear transformation of the input features implicitly is to use the Kernel method
%TODO: Brief explanation of the Kernel method
The Kernel method was adapted for the Strcutured Perceptron by Collins and Duffy\citep{collins2002_2}, they define the dual structured perceptron as we can see at Figure \ref{fig:structkernel}. 

\begin{figure}[ht]
\small
\center
\fbox{\begin{minipage}{0.8\textwidth}
\begin{description} \itemsep0pt \parskip0pt \parsep0pt \itemindent-0.9cm

	\item \textbf{for} $i=1$ \textbf{to} $n$ 
 			\begin{description} \itemindent-1.2cm
		 			\item $\hat{y} \leftarrow 
		 			    \arg\max_{z \in \mathcal{Y}(x)}{
		 			         \sum_{(i,j)}\alpha_{i,j}(
		 			                 \Phi(x,z)\cdot\Phi(x_i,i) - \Phi(x,z)\cdot\Phi(x_i,j)
		 			             )
		 			      }$
		 			      
		 			 \item \textbf{if} $y \neq \hat{y}$
		 			 \begin{description} \itemindent-1.2cm
		 			        \item $\alpha_{y,\hat{y}} \leftarrow \alpha_{y,\hat{y}} + 1$
		 			 \end{description}
			\end{description}
			
\end{description}
\end{minipage}}
\caption{The Dual Structured Perceptron algorithm.}
\label{fig:structkernel}
\end{figure}

A drawback of this approach is the potentially high memory consumption to keep the parameter $\alpha_{i,j}$ for each predicted structure $j$.
Each input $x$ defines a set of possible atomic elements $\mathcal{A}$,
    the output structures $y$ are sets of atomic elements, so $y \in 2^{\mathcal{A}}$.
Therefore, the number of possible output structures is typically exponential on the number of atomic elements
To deal with this problem 
    we derive the feature map of a structure as the sum of the feature maps of its atomic elements
\[
    \Phi(x,y) = \sum_{a \in y} \phi(x,a)
\],
Our version of the Dual Structured Perceptron count the atomic elements instead of the structures, as we can see in Figure
\ref{fig:atom_structkernel}.

%TODO: Add description of (i,j) domain 

\begin{figure}[ht]
\small
\center
\fbox{\begin{minipage}{0.95\textwidth}
\begin{description} \itemsep0pt \parskip0pt \parsep0pt \itemindent-0.9cm

	\item \textbf{for} $i=1$ \textbf{to} $n$ 
 			\begin{description} \itemindent-1.2cm
		 			\item $\hat{y} \leftarrow
		 			    \arg\max_{z \in \mathcal{Y}(x)}{
		 			         \sum_{(i,j)}\left(
		 			            \sum_{a \in z\cap i}
		 			                \alpha_{a}\phi(x,a)\cdot\phi(x_i,a) - 
		 			            \sum_{a \in z\cap j}
		 			                \alpha_{a}\phi(x,a)\cdot\phi(x_i,a)\right)
		 			      }$
		 			      
		 			 \item \textbf{if} $y \neq \hat{y}$
		 			 \begin{description} \itemindent-1.2cm
		 			        \item \textbf{for} $a \in y$
		 			            \begin{description} \itemindent-1.2cm
		 			                \item $\alpha_a \leftarrow \alpha_a + 1$
		 			             \end{description}
		 			        \item \textbf{for} $a \in \hat{y}$
		 			            \begin{description} \itemindent-1.2cm
		 			                \item $\alpha_a \leftarrow \alpha_a - 1$
		 			             \end{description}
		 			 \end{description}
			\end{description}
			
\end{description}
\end{minipage}}
\caption{The Dual Structured Perceptron algorithm.}
\label{fig:atom_structkernel}
\end{figure}

\section{Experiments}
\label{sec:experiments}
To illustrate the practical performance of the \emph{CuSum Rank} approach,
we conduct some experiments
    % using its batch version,
    applied to benchmark datasets for ordinal regression
\cite{pedregosa2017}\cite{fathony2017},
    described by \cite{chu2005}
    and available online\footnote{\url{http://www.gatsby.ucl.ac.uk/~chuwei/ordinalregression.html}}.
    
We assume the Mean Absolute Error (MAE) as the prediction quality metric.
For each dataset,
we perform two learning steps:
    feature learning
    and ordinal regression.
First,
we train a single layer neural network,
    with $1.000$ neurons in the hidden layer,
to perform ordinary least squares regression.
For each example,
we use the output of the hidden layer generated by the trained regression network
as the example's new representation.
Finally,
we use this new representation as input to our \emph{Cusum Rank} perceptron,
    we implement
        the average version \cite{collins2002}
        with margin \cite{tsochantaridis2005}, to improve performance.

\iffalse
Next,
we present the experimental setup
and report our findings and remarks.
\fi
    
\subsection{Experimental Setup}

For each dataset,
    the target values were discretized into ordinal quantities using equal-length binning, there are two versions with five and ten quantiles respectively, as described by \cite{chu2005},
    the input features are standardized by using the training set global mean and variance for each feature or minmax normalization.
% Next,
% for the three smaller datasets,
    % we run LOOCV, 
We use the same 20 random partitions as described by \cite{chu2005}, and use 1 partition for parameter selection.
% One tenth of each training set fold is used for validation.
For the ordinary least squares regression,
    we use gradient descent until convergence on the validation set,
    with the learning rate arbitrarily set to 
    $0.001$.
We say that convergence is achieved
    when there is no improvement on the validation set MAE
    during 100 consecutive epochs.
For the ordinal regression,
    we choose one partition for parameter selection,
    the selected parameters are
        feature normalization method, 
        number of epochs for convergence,
        regularization parameters.
% For the smaller datasets,
% the total number of gradient descent epochs is
%     500 for regression,
%     200 for softmax
%         with learning rate arbitrarily set to 0.1
%     and 50 for each perceptron 
%         with margin arbitrarily set to 0.1.

\subsection{Results}

Our empirical findings are summarized in Table \ref{tab:ordinal_results_5} and \ref{tab:ordinal_results_10}.
For each candidate model,
    we report its cross validation MAE and standard error estimates.
The main finding is that
    \emph{Cusum Rank} is competitive to the alternative model.
    
We observe on Table \ref{tab:ordinal_results_5} and on Table \ref{tab:ordinal_results_10} the results of the model for the 5 bin partition and 10 bin partition datasets respectively. The benchmark is the SVM model with gaussian kernel from \cite{chu2005}.

% It is also worth to notice that,
%     the observed MAE for \emph{Cusum Rank},
%         evaluated in the ordinal regression task,
%     is significantly smaller than the one for feature extraction neural network,
%         evaluated in the easier ordinary regression task.
        
% From a neural network architecture point of view,
% our empirical findings suggest the use of
%     an output layer of cusum units for ordinal regression
    % instead of the usual softmax layer of multiclass problems.

\begin{table}[h]
    \centering
    
    \begin{tabular}{cc||cccc}
         \noalign{\hrule height 1.03pt}
         Dataset & Size & Benchmark & CuSum \\
         \noalign{\hrule height 0.5pt}
         abalone & 4,177& $0.229$  & \boldmath{$0.228$} \\
         diabetes & 43 & $0.746 $ & \boldmath{$0.680$} \\
         auto-mpg & 392 & $0.259$ & \boldmath{$0.251$} \\
         pyrimidines & 74&\boldmath{$0.45$}  & $0.522$ \\
         machine & 209 & $0.1915$ &  \boldmath{$0.1872$} \\
         wisconsin & 194& $1.003$  & \boldmath{$0.979$} \\
         stocks & 950 & \boldmath{$0.1081$}& $0.160$ \\
         triazines & 186 & \boldmath{$0.6977$} & $0.720$ \\
         housing & 506& \boldmath{$0.2672$} & $0.2766$ \\
         \noalign{\hrule height 1.03pt}

    \end{tabular}
    \caption{MAE values on the benchmark ordinal datasets with 5 bins}
    \label{tab:ordinal_results_5}
\end{table}

\begin{table}[h]
    \centering
    
    \begin{tabular}{cc||cccc}
         \noalign{\hrule height 1.03pt}
         Dataset & Size & Benchmark & CuSum \\
         \noalign{\hrule height 0.5pt}
         abalone & 4,177& \boldmath{$0.5160$}  & $0.5169$ \\
         diabetes & 43 & $2.4577$ & \boldmath{$1.076$} \\
         auto-mpg & 392 & \boldmath{$0.5081$} & $0.5765$ \\
         pyrimidines & 74&\boldmath{$0.9187$}  & $0.9729$ \\
         machine & 209 & \boldmath{$0.4398$} &  $0.4906$ \\
         wisconsin & 194& $2.1250$  & \boldmath{$2.1171$} \\
         stocks & 950 & $0.1804$ & \boldmath{$0.1778$} \\
         triazines & 186 & $1.2308$ & \boldmath{$1.2058$} \\
         housing & 506& \boldmath{$0.4971$} & $0.6075$ \\
         \noalign{\hrule height 1.03pt}
    \end{tabular}
    \caption{MAE values on the benchmark ordinal datasets with 10 bins}

    \label{tab:ordinal_results_10}
\end{table}

\section{Conclusion}
\label{sec:conclusion}
Building an \emph{Ensemble of Perceptrons} is an effective approach to solve the Ordinal Regression prediction problem.
Here,
	we follow this approach
	and propose two versions of the \emph{CuSum Rank} online learning algorithm.
These two new algorithms are designed by
	instantiating the Structured Perceptron framework.
We also introduce a new mistake bound
	for the Structure Perceptron algorithm learning,
	when applied to the class of loss-augmented linearly separable structured problems.
Additionally,
we derive a novel mistake bound for the Passive-Aggressive version.
These new mistake bounds are key to obtain
	mistake bounds for the two \emph{CuSum Rank} online learning algorithms.
\iffalse
Additionally,
we cast the \emph{Prank} algorithm as
	a special case of our Cumulative Sum Ranking approach.
For several structured prediction problems,
the set $Y$ of possible structures is finite.
In these cases,
any loss function defined on $Y$ is bounded.
Therefore,
to apply our mistake bound to such a setting,
it is critical to provide a feature map that is loss sensitive.
We think that finding these feature maps is a relevant new open problem.
\fi
From a neural network architecture point of view,
our empirical findings suggest
	an output layer of cusum units for ordinal regression,
	whereas in the multiclass problem it is usual to have a softmax output layer.
It would be interesting to explore the effect of cusum hidden layers.

\section*{Acknowledgements}
This material is based upon work supported by the Air Force Office of
Scientific Research under award number FA9550-19-1-0020.

\end{document}